\newtheorem{theorem}{Theorem}
\newtheorem{lemma}{Lemma}
\newtheorem{corollary}{Corollary}
\theoremstyle{remark}
\newtheorem{example}{Example}
\newtheorem{remark}{Remark}
\theoremstyle{definition}
\newtheorem{definition}{Definition}
\newcommand{\N}{\mathbb{N}}
\newcommand{\R}{\mathbb{R}}
\renewcommand{\P}{\mathbb{P}}
\renewcommand{\d}{\textnormal{d}}
\newcommand{\I}{\mathbb{I}}
\newcommand{\E}{\mathbb{E}}
\renewcommand{\O}{\mathcal{O}}
\renewcommand{\H}{\mathcal{B}}
\newcommand{\Hypo}{\mathcal{H}}
\newcommand{\D}{S}
\newcommand{\Norm}{\mathcal{N}}
\newcommand{\X}{\mathcal{X}}
\newcommand{\T}{\mathcal{T}}
\begin{document}
\title{
Suitability of Different Metric Choices for Concept Drift Detection\thanks{We gratefully acknowledge funding by the BMBF TiM, grant number 05M20PBA.}}
%
%
\author{%
	Fabian Hinder \\
	Cognitive Interaction Technology (CITEC)\\
	Bielefeld University\\
	Inspiration 1, D-33619 Bielefeld, Germany \\
	\texttt{fhinder@techfak.uni-bielefeld.de} \\
	\and
	Valerie Vaquet \\
	Cognitive Interaction Technology (CITEC)\\
	Bielefeld University\\
	Inspiration 1, D-33619 Bielefeld, Germany \\
	\texttt{vvaquet@techfak.uni-bielefeld.de} \\
	\and
	Barbara Hammer \\
	Cognitive Interaction Technology (CITEC)\\
	Bielefeld University\\
	Inspiration 1, D-33619 Bielefeld, Germany \\
	\texttt{bhammer@techfak.uni-bielefeld.de} \\
}
\maketitle              
\begin{abstract}
The notion of concept drift refers to
the phenomenon that the distribution, which is underlying the observed data,
changes over time;
as a consequence machine learning models may become inaccurate and need adjustment.
Many unsupervised approaches for drift detection rely on measuring the discrepancy between the sample distributions of two time windows. This may be done directly, after some preprocessing (feature extraction, embedding into a latent space, etc.), or with respect to inferred features (mean, variance, conditional probabilities etc.). Most drift detection methods can be distinguished in what metric they use, how this metric is estimated, and how the decision threshold is found.
In this paper, we analyze structural properties of the drift induced signals in the context of different metrics. We  compare different types of estimators and metrics theoretically and empirically and investigate the relevance of the single metric components. In addition, we propose  new choices and demonstrate their suitability in several experiments.
\end{abstract}

\section{Introduction}

One popular assumption in classical machine learning is that the observed data is generated i.i.d.\ according to some unknown underlying and stationary probability $\P_X$. 
Yet, stationarity is  often violated 
for realistic learning tasks 
 such as machine learning based on (streaming) social media entries or measurements of IoT devices, which are subject to continuous change
\cite{DBLP:journals/adt/BifetG20,DBLP:journals/widm/TabassumPFG18a}.
Here, concept drift, i.e.\ changes of the underlying distribution $\P_X$ occurs,  caused e.g.\ by seasonal changes, changed demands, ageing of sensors, etc.
Learning with drift can be dealt with in  
different ways. Often, data are treated via windowing techniques, and the model is continuously adapted based on the characteristics of the data in an observed time window. Thereby, many approaches deal with supervised scenarios and they aim for a small interleaved train-test error. In recent years, some approaches  deal with concept drift in unsupervised settings \cite{DBLP:journals/cim/DitzlerRAP15,Lu_2018}.
One fundamental problem, which is part of supervised learning schemes as well as 
unsupervised drift modelling and which will be in the focus of this publication, is the challenge of drift detection and
determination of the time point when drift occurs. 

According to~\cite{Lu_2018} most drift detection schemes proceed in four stages: 1)~collecting data, 2)~building a descriptor of the data in two time windows, 3)~computing a similarity based on the obtained the descriptor, 4)~normalize the similarity, e.g. by considering an appropriate statistical test. 
This work focuses on the second and third stage of this scheme, which constitute the most crucial ones. The first stage can be solved in many problem-specific ways without a major effect on the next stages. The decision process in stage four can be bounded independently of the concrete realization:  the difference of the output of stage three under the null hypothesis (no drift) and the alternative (drift) constitutes such a bound.

The aim of the present work is to determine the influence of the two major ingredients of stage 2 and 3, namely the used descriptor (stage 2) and the similarity measure applied to the descriptor (stage 3) and to evaluate their influence on the capability to detect drift and localize it in time.
 We will empirically show that the chosen similarity measure is  of minor importance. The descriptor has an impact. In lay terms, it is more important how to estimate
 rather than what to estimate. This claim will be investigated from a theoretical and an empirical perspective using different estimation schemes.

Beyond this general comparison, we provide a new method to construct dataset-specific models to solve stages two and three in an efficient way: random projection-based and moment tree-based binning. This is of particular interest since dataset-agnostic similarity measures face the challenge of an inherent trade-off between decision accuracy and convergence speed.

This work is structured as follows: first (Section~\ref{sec:setup}) we recall relevant work from the literature and define the problem setup -- in particular, we describe different approaches to tackle the four stages (Section~\ref{sec:overview}). We also provide a general argument when an estimator is capable of drift detection (see Theorem~\ref{thm:universal}). In the last section (Section~\ref{sec:exp}) we evaluate the metrics and estimators -- showing their strengths and weaknesses --  and show the suitability of our proposed approaches. 

\section{Problem Setup}
\label{sec:setup}

In the usual time invariant setup of machine learning, one considers a generative process $\P_X$, i.e. a probability measure, on the data space $\X$. In this context, one views the realizations of $\P_X$-distributed, independent random variables $X_1,...,X_n$ as samples.
Depending on the objective, learning algorithms try to infer the data distribution based on these samples, or, in the supervised setting, the posterior distribution. We will not distinguish between these settings and only consider distributions in general,
subsuming supervised and unsupervised modeling \cite{DBLP:journals/corr/WebbLPG17}.

Many processes in real-world applications are time dependent, so it is reasonable to incorporate time into our considerations. One prominent way to do so, is to consider an index set $\T$, representing time, and a collection of probability measures $p_t$ on $\X$, indexed over $\T$, which may change over time \cite{asurveyonconceptdriftadaption}. We will usually assume $\T = [0,1]$.
In the following, we investigate the relationship of those $p_t$.
Drift refers to the fact that $p_t$ varies for different time points, i.e.\ 
\begin{align*}
    \exists t_0,t_1 \in \T : p_{t_0} \neq p_{t_1}.
\end{align*}
In this context, we consider a sequence of samples $(X_1,T_1),(X_2,T_2),...$, with $X_i \sim p_{T_i}$ and $T_i \leq T_{i+1}$, as a stream.
Notice, that we will usually use the shorthand drift instead of concept drift.

In this contribution we will mainly focus on the case of one single abrupt drift, i.e. there exist probability measures $P$ and $Q$ and a time point $t_0 \in \T$, such that
\begin{align*}
    p_t = \begin{cases}
    P, & t \leq t_0 \\
    Q, & t > t_0
    \end{cases}.
\end{align*}
In this context we can ask two questions, which are referred to as drift detection:
\begin{enumerate}
    \item \emph{Whether} there is drift, i.e. does $P \neq Q$ hold? \label{Q:Whether}
    \item If so, \emph{when} does the drift occur, i.e. what is $t_0$? \label{Q:When}
\end{enumerate}

\subsection{A General Scheme for Drift Detection}


As most drift detection methods are applied in a streaming context, one usually considers time-dependent data samples $\D(t)$, observed during a time period $W(t)$.
To detect drift, one estimates the similarity of the distributions of a (presumably before drift) \emph{reference time-interval} (or window) $W_-(t)$ and a \emph{current time-interval} $W_+(t)$, which are obtained by splitting $W(t)$. The estimation is done using the sub-samples  $\D_-(t)$ and $\D_+(t)$ called \emph{windows} of $\D(t)$ that correspond to $W_-(t)$ and $W_+(t)$, respectively.
The way this is done varies depending on the specific algorithm.
In this section we  discuss some of the most prominent choices for the relevant stages 1-4 of this drift detection scheme as described in \cite{Lu_2018}. 

\paragraph{Stage 1: Acquisition of data:} As stated above most approaches are based on sliding windows, however, the concrete implementation can vary. In particular, the reference window is realized in different ways: as sliding window, stationary, growing window, implicitly within a model, etc.
To illustrate the idea we describe the examples of (a variant of) ADWIN and a simple version of an implicit reference window:

\begin{example}
ADWIN~\cite{adwin} uses only one sliding window $\D(t_\text{now})$. 
To test for drift this window is split successively into to halves,  $\D_-(t;t_\text{now})$ and $\D_+(t;t_\text{now})$. 
Then, these  are compared using a suitable distance measure $\hat d$, i.e. the statistic of ADWIN is given by 
$\sup_{t} \hat{d}(\D_-(t;t_\text{now}),\D_+(t;t_\text{now}))$.
In the original version ADWIN ``prepocesses'' the data by comparing the result of a fixed classification model against reference labels. However, extensions with other statistical tests are straightforward.
\end{example}

\begin{example}
A simple approach with implicit reference window consists of a reference mean $\hat{\mu}_\text{ref} = \mu(S_-(t))$ 
and a sliding window $\D_+(t)$ of fixed size corresponding to $W_+(t)$. If there is no drift, the mean in the current window and the reference should be the same, i.e. $\mu(S_+(t)) \approx \hat{\mu}_\text{ref}$. Based 
in this assumption a drift detection can be performed using a $t$-test. 
Once a sample drops out of the current window $S_+(t)$ it is used to update $\hat{\mu}_\text{ref}$.
\end{example}

Apart from these examples, some approaches use preprocessing such as a deep latent space embedding. We do not explain those possibilities in more detail.
Instead, we focus on the case of two windows only and try to evaluate the suitability of different distance measures for the task at hand.

\paragraph{Stage 2: Building a descriptor:} 
\begin{figure}[t]
    \centering
    \begin{align*}
    \xymatrix{
            & \H \ar[dr]^s
            & \\
\bigcup_n ( \T \times \X )^n \times \T \ar[rr]_{\hat{d}} \ar[ru]^A
&& \R
}
\end{align*}
    \caption{Two stage scheme of estimating distribution similarity from data. Distance of distributions ($\hat{d}$) can be estimated by building a descriptor ($A$) and then computing a similarity ($s$).}
    \label{fig:process}
\end{figure}
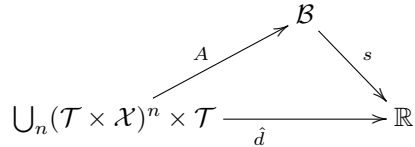
Comparing two distributions directly based on a sample is usually complicated. Therefore, the process is split into two parts which are visualized in Fig.~\ref{fig:process}: First a descriptor of the distributions is built (this corresponds to $A$ in Fig.~\ref{fig:process}), and then the similarity of the distribution is computed based on that descriptor ($s$ in Fig.~\ref{fig:process}). Possible descriptors are grid- or tree-based binnings, neighbor-, and kernel approaches. We list some of the most popular descriptors together with suitable similarity measures in Section~\ref{sec:overview}. 

\paragraph{Stage 3: Computing similarity:} As stated in the last paragraph, computing the similarity of two samples is often reduced to a comparison of descriptors which are based on those samples ($s$ in Fig.~\ref{fig:process}). 
Although, several approaches for building descriptors exist, many 
admit the same or at least comparable similarity measures. For example, if we consider binning descriptors, it does not matter whether the bins are obtained from a grid or a tree, or if the grid or tree is adjusted to the presented data or not. 

\paragraph{Stage 4: Normalization:} As the obtained similarities typically depend on both, the method, i.e. stages 1-3, and also the concrete distribution at hand, it is necessary to normalize the result  to obtain a useful scale. One of the most common ways to do this is by a relation of the similarity to the statistic of a statistical test; in this case the $p$-value offers a normalized scale. In the literature a large variety of approaches are considered. 
However, independently of the concrete normalization, the presence of drift can be observed from the output of stage 3; 
more formally the post hoc optimal normalization after stage 3 provides an upper bound on the quality of any concrete normalization. 
Therefore, we will focus on the output after stage 3 in the following.

\paragraph{Beyond stage 4: Ensemble and hierarchical approaches:}
Some authors~\cite{Lu_2018} suggest to combine multiple drift detectors. They are usually arranged in an ensemble, e.g. by combining multiple $p$-values after stage 4 into a single one, or hierarchical, e.g. by combining a computationally inexpensive but imprecise detector with a precise but computationally expensive validation. Although, those approaches differ on a technical level, they do not from a theoretical perspective, as the suggested framework is sufficiently general.

\subsection{Formal Setup and Research Question}

Before we can formally specify question~\ref{Q:Whether} and \ref{Q:When}, we first have to define the sampling process:
\begin{definition}
Let $\X$ be a data space and $\T \subset \R$.
Let $(p_t,P_T)$ be a drift process \cite{davidd} on $\X$ and $\T$, i.e. a distribution $P_T$ on $\T$ and a Markov kernel $p_t$ from $\T$ to $\X$. 
A \emph{window} $\D$ drawn from $p_t$ during a time interval $W \subset \T$ is a sample $\D = \{(x_1,t_1),\cdots,(x_n,t_n)\}$ drawn i.i.d. from $p_t P_T[\:\cdot\:|\:W\:]$, assuming $P_T(W) > 0$.
\end{definition}

We use the following notation:
If the choice of $W$ is not specified, we will assume $W = \T$. 
For the sub-intervals $W_-(t) := (-\infty,t] \cap W$ and $W_+(t) := (t,\infty) \cap W$, we define the sub-windows $\D_-(t) := \{(x',t') \in \D \:|\: t' \in W_-(t)\}$ and $\D_+(t)$ analogously.  


\paragraph{Question~\ref{Q:Whether}: ``Whether'':}
It was shown in \cite[Theorem 2]{conttime} that drift is equivalent to different sub-window distributions, i.e. it exists a $t \in W$ such that $p_{W_-(t)} \neq p_{W_+(t)}$, here $p_W$ denotes the distribution during $W$. Since we do not observe the underlying distributions, but only a window $\D$, it is reasonable to quantify this using estimating of the distance
\begin{align*}
    \hat{d}(\D_-(t),\D_+(t)) := (s \circ A)(\D,t) 
\end{align*}
which should be (significantly) larger than 0 if and only if there is drift. Here we decompose $\hat{d}$ as described before into a descriptor $A$ and a similarity $s$.
%
Control of the uncertainty of the sampling process when detecting drift can be formalized as follows:
\begin{definition}
%
%
Let $(p_t,P_T)$ be a drift process, and $\D$ denote a window drawn from it.
An \emph{estimator $(A,s)$} is a pair of measurable maps, one mapping windows to descriptors, i.e. \mbox{$A : \cup_n (\X \times \T)^n \times \T \to\H$,} the other mapping descriptors to similarities, i.e. $s : \H \to \R$. We refer to $\H$ as the \emph{description space}.

An estimator is \emph{drift detecting}, iff it raises correct alarms with a high probability in the following sense: There exists a $0 < \delta < 1/2$ and a number $n$ such that with probability at least $1-\delta$ over all choices of $\D$, with $|\D| > n$, it hold $s \circ A(\D,t) > 0$ for some $t$ if and only if there is drift. 

An estimator is \emph{surely drift detecting}, iff it raises correct alarms with arbitrarily high certainty, that is the above statement holds for all $0 < \delta < 1$. 
\end{definition}

Notice, that this definition is applicable for general drift, including gradual, incremental, and periodic.
Furthermore, the difference between drift detection and  sure drift detection only occurs in the limit of the size of $\D$. As long as we are restrained to windows of fixed sample size, both notions are effectively the same.

\paragraph{Question~\ref{Q:When}: ``When'':} We are  interested in finding the time point $t_0$  where the drift actually occurs. This is often estimated by  the point $\hat{t_0}$ with largest difference of the sub-windows, i.e. 
\begin{align*}
    \hat{t_0} = \underset{t \in \T}{\textnormal{ arg max }} \hat{d}({\D_-(t)}, {\D_+(t)}).
\end{align*}
The precision of this estimator can be quantified by mean ratio of samples between the true drift event $t_0$ and its estimate $\hat{t_0}$. 
This can be captured in the following definition:
\begin{definition}
Let $(p_t,P_T)$ be a drift process with a single abrupt drift event at $t_0$ with $0 < P_T(W_-(t_0)) < 1$. Let $\D$ be a window drawn from $p_t$.

We define the \emph{precision} as $1-P_T( \; [t_0,\hat{t_0}) \cup (\hat{t_0},t_0]\; )$.\footnote{Recall that $[a,b) = (a,b] = \emptyset$ for $a \geq b$.} 

We say that an estimator $(A,s)$ is \emph{precise}, iff for all $0 < \delta < 1$ and $\epsilon > 0$ there exists a number $n$, such that with at least probability $1-\delta$ over all choices of $\D$, with $|\D| > n$, the precision is larger than $1-\epsilon$, assuming drift was detected.
\end{definition}

Notice, that the restriction to a single drift event in the definition of precision is necessary to avoid the ambiguity of which event $\hat{t_0}$ is to be compared to.

As finding the best split $\hat{t_0}$ requires the evaluation of multiple potential split points $t$, an efficient computation is important. As we have to compute the similarity at each time point, efficiency holds if the same descriptor can be used for multiple splits points, i.e. $\hat{d}(\D_-(t),\D_+(t)) = (s\circ A)(\D,t) = s(A_0(\D),t)$, where $A_0$ is independent of the split point. Using this idea we obtain the following definition:

\begin{definition}
We say that an estimator $(A,s)$ is \emph{$c$-complex}, iff $s \in \O(c)$ regarding computational complexity and $A$ factorizes as $A_0 \times \text{id}_\T$, that means it holds $(s\circ A)(\D,t) = s(A_0(\D),t)$.
\end{definition}

Notice, that the computational efficiency of $s$ crucially depends on the codomain of $A_0$. For example factorization also holds if we choose the set of all functions from $\T$ to $\R$ and $s$ as the evaluation map.

The notion of complexity restricts how much $A(\D,t)$ can be adapted to the split point $t$. Yet, the incorporation of the temporal information contained in $\D$ is desirable as it usually leads to better descriptors. We therefore say that an estimator is ``arrival time respecting'' if the descriptor uses temporal information beyond the split point: 

\begin{definition}
An estimator $(A,s)$ is \emph{arrival time respecting}, iff  
the obtained descriptor depends on the timing within the sub-windows,
i.e. it exists a split of a window $\D = \D_- \dot\cup \D_+$, and permutations $\pi_-,\pi_+$, such that $A(\D_- \cup \D_+,t) \neq A(\tilde{\D}_- \cup \tilde{\D}_+,t)$, where $\tilde{\D}_- = \{(x_i,t_{\pi_-(i)})\:|\:i=1,...,n_-\}$ is the time-permuted version of $\D_- = \{(x_i,t_i)\:|\:i=1,...,n_-\}$ and analogous for $\tilde{\D}_+$.
\end{definition}


\section{Similarity Estimators}
\label{sec:overview}
We are interested in popular instantiations of stages 2 and 3 and their properties.
Binning can be considered as one of the simplest strategies to estimate a probability distribution. Essentially, the input space is segmented and  the number of samples per bin is counted. The ratio of these samples as compared to all provides an estimate for the actual probability. Based thereon,  distance measures like total variation~\cite{DBLP:journals/corr/WebbLPG17}, Hellinger distance~\cite{hddm}, or Kullback-Leibler divergence~\cite{PCADD,kdqTree} can be computed. 
We will also consider the Jensen-Shannon metric which is based on the Kullback-Leibler divergence.
Binning on a grid was used in the work \cite{DBLP:journals/corr/WebbLPG17}, for example, to estimate the rate of change in data streams. 

Since the number of required bins grows exponentially with the number of dimensions, one might consider multiple, separate binnings in low dimensional projections for high dimensional data. Typical choices are projections onto the \emph{coordinate axis / marginals}~\cite{hddm} and onto the \emph{principal components}~\cite{PCADD}. While this strategy reduces the descriptor complexity, it is not capable of capturing drift that affects the correlation of features or of components with small variance, respectively.
As this poses a problem for drift detection in the real world, we propose a new technology: \emph{random projection} binning considers  binnings along randomly chosen projection axes. 

Instead of using an equally spaced grid structure, one can also consider a recursive splitting of the dataset similar to a decision tree with leaves forming the bins.
Depending on the way of splitting, these are  \emph{Random Trees}, where the dimension and the split point are chosen completely randomly, or \emph{$kdq$-Trees}~\cite{kdqTree}, where one successively splits the dimensions along the center. 
As such  splits often lead to slow convergence, we propose to use a comparably new alternative: \emph{Moment Trees}~\cite{momentTree}, which are designed for conditional density estimation. Here, they are trained to predict the (distribution of) time given data, i.e. $\P_{T|X}$. 
Notice, that due to the relation of a supervised problem, one can perform a parameter tuning, which is not possible for the other approaches.

Neighborhood-based approaches offer a popular and robust choice in non-parametric methods which have been widely used for 
various estimators, including Kullback-Leibler divergence~\cite{kNNDKL}.
In drift detection the \emph{Local Drift Degree (LDD)}~\cite{LDD} is one method that is explicitly based on $k$-nearest neighbors ($k$-NN).

Another, non-parametric approach are kernels. \emph{Maximum Mean Discrepancy~(MMD)}~\cite{MMD} is a kernel-based metric, which was also applied to drift detection~\cite{fail}.
These methods are summarized in Table~\ref{tab:summary}. We investigate their theoretical properties and experimental behavior in the following.

\begin{table}[t]
\newcommand{\symYes}{\ding{51}}%
\newcommand{\symProb}{(\ding{51})}%
\newcommand{\symNo}{\ding{55}}%
    \centering
    \caption{Summary of estimators (Drift Detecting: No \symNo, Drift detecting \symProb, Surely drift detecting \symYes, Arrival Time Respecting: No \symNo, Yes \symYes) }
    \begin{tabular}{llccrl}
    \hline
    Descriptor ($\H$) & Metric(s) & DD & ATR & \multicolumn{2}{r}{Complexity} \\
    \hline
    Marginal Bin.~\cite{hddm}     & \multirow{6}{*}{\parbox{3cm}{Total variation~\cite{DBLP:journals/corr/WebbLPG17}, Hellinger~\cite{hddm}, Jensen-Shannon, $D_\text{KL}$~\cite{kdqTree}}} & \symNo  & \symNo & $ \O(1)$ & cumulative histogram \\
    Random Proj. Bin. &  & \symProb  & \symNo & $\O(1)$ & cumulative histogram \\
    Random Tree (Bin.)       &  & \symYes  & \symNo & $\O(1)$ & cumulative histogram \\
    $kdq$-Tree~\cite{kdqTree} (Bin.)      &  & \symYes  & \symNo & $\O(1)$ & cumulative histogram \\
    Moment Tree (Bin.)      &  & \symYes  & \symYes & $\O(1)$ & cumulative histogram \\[0.2em]
    $k$-NN & LDD~\cite{LDD}, $D_\text{KL}$~\cite{kNNDKL} & \symYes  & \symNo & $\O(k)$ & neighborhood graph \\[0.2em]
    \parbox{3cm}{Kernel embedding of distribution~\cite{fail}} & MMD~\cite{MMD} & \symYes  & \symNo & $\O(|W|)$ & \parbox{3cm}{Cholesky decomposition of kernel matrix} \\
    \hline
    \end{tabular}
    \label{tab:summary}
\end{table}

\section{Theoretical Analysis}
\label{sec:theo}

We will now discuss some of the properties of the approaches presented in Section~\ref{sec:overview} from a theoretical point of view. We will see that, regarding  question~\ref{Q:Whether} and~\ref{Q:When}, common estimators for drift detection are  well suited. 
In the following we will always assume a drift process $(p_t,P_T)$ on $\X,\T$, with $\T \subset \R$.

\paragraph{Linear projections: } Many, in particular, simple methods use projections as a first step. However, as already discussed by \cite{PCADD} not every possible projection is also suitable. Indeed, most approaches from the literature are not:

\begin{remark}
Linear projections with respect to marginals~\cite{hddm} or principal components~\cite{PCADD} are not drift detecting, independent of the further processing. This stays true if $p_t$ is compactly supported. 
\end{remark}

Conversely, random projections are sufficient for drift detection:

\begin{theorem}
\label{thm:linear}
Let $\X = \R^d$ and
assume that $p_t$ is compactly supported, then random projection (with $w \sim \Norm(0,\I)$) with random bins is drift detecting.
\end{theorem}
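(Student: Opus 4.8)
The plan is to reduce the statement to one clean fact about random one-dimensional projections and then let the general drift-detection machinery finish the job. By \cite[Theorem 2]{conttime} drift is equivalent to $p_{W_-(t)} \neq p_{W_+(t)}$ for some split point $t$; fixing such a $t$ and writing $P := p_{W_-(t)}$, $Q := p_{W_+(t)}$ (both compactly supported, since $p_t$ is), the task becomes detecting $P \neq Q$, while in the no-drift case $p_{W_-(t)} = p_{W_+(t)}$ for every $t$. So I would first argue that it suffices to show: if $P \neq Q$ are compactly supported on $\R^d$, then for $\Norm(0,\I)$-almost every $w$ the pushforwards $w_\ast P$ and $w_\ast Q$ (the laws of $\langle w, X\rangle$ under $P$ and $Q$) are still distinct, and, given this, a random binning of $\R$ separates $w_\ast P$ from $w_\ast Q$ with positive probability. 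Once the population descriptor separates $P$ and $Q$ with positive probability over the internal randomness, drift detection at the sample level follows from consistency of the cumulative-histogram estimator together with the general criterion (Theorem~\ref{thm:universal}): the statistic $s\circ A(\D,t)$ concentrates around a strictly positive population value under drift and around $0$ when $P=Q$, which yields some $\delta<1/2$. I would also use ``sufficiently many bins'' here so that the positive separation probability can be pushed above $1/2$, matching the definition.

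The heart of the proof --- and the step I expect to be the main obstacle --- is the measure-zero claim for the ``bad'' directions, i.e.\ those $w$ with $w_\ast P = w_\ast Q$. This is genuinely stronger than the Cram\'er--Wold theorem, which only produces one good direction; indeed, as recorded in the preceding remark, a fixed projection (a marginal, or a leading principal component) can be defeated by choosing $P \neq Q$ with identical image under it, so some form of ``almost every projection is faithful'' is really needed. The argument I would use: since $P$ and $Q$ are compactly supported, their characteristic functions extend to entire functions on $\mathbb{C}^d$ (Paley--Wiener), so $f := \varphi_P - \varphi_Q$ is real-analytic on $\R^d$ and $f \not\equiv 0$ because $P \neq Q$; hence its zero set $Z := f^{-1}(0)$ has Lebesgue measure zero. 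For fixed $w$, the characteristic function of $w_\ast P$ at $s\in\R$ is $\varphi_P(sw)$, so $w_\ast P = w_\ast Q$ iff $\varphi_P(sw) = \varphi_Q(sw)$ for all $s$, i.e.\ the whole line $\R w$ lies in $Z$; in particular $w \in Z$. Thus the bad set is contained in $Z$, hence Lebesgue-null, hence $\Norm(0,\I)$-null, since the Gaussian is absolutely continuous with respect to Lebesgue measure. (Nothing is special about the Gaussian here; any $w$ drawn from a law that is absolutely continuous with respect to Lebesgue measure works.)

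It then remains to pass from ``$w_\ast P \neq w_\ast Q$ on $\R$'' to ``random bins detect it''. The two one-dimensional laws have distinct cumulative distribution functions, and by right-continuity they differ on a nondegenerate interval, which has positive Lebesgue measure; a random bin boundary lands in that interval with positive probability, and on that event the population histograms differ, so any of the metrics in Table~\ref{tab:summary} ($D_\mathrm{TV}$, Hellinger, Jensen--Shannon, $D_\mathrm{KL}$) evaluates to something strictly positive. Assembling the three pieces --- reduction to $P \neq Q$ on sub-windows, a.e.\ faithfulness of the random projection, and positive-probability detection by random bins --- and transferring to finite samples via Theorem~\ref{thm:universal} proves that random projection with random bins is drift detecting. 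Beyond the analytic step, the only delicate bookkeeping I anticipate is matching the probabilistic quantifiers to the definition (a \emph{single} random choice of $(w,\text{bins})$ must succeed with probability bounded away from $1/2$, simultaneously over the sample), which is handled by taking the number of bins and then $n$ large enough.
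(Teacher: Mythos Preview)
Your proposal is correct and follows the same overall architecture as the paper's proof: reduce to $P\neq Q$ via \cite[Theorem~2]{conttime}, show that almost every direction $w$ keeps $w_\ast P\neq w_\ast Q$, then use right-continuity of the two one-dimensional cdfs to get a nondegenerate interval on which they differ, so a random bin boundary separates them with positive probability. The cdf/cadlag step and the passage to finite samples are argued the same way in the paper (the paper phrases the latter as ``compare the empirical estimates using a statistical test''; your appeal to Theorem~\ref{thm:universal} is not literally the right citation, since that theorem takes an infimum over a hypothesis class rather than a random draw, but the concentration argument you describe is what is actually needed and is straightforward here).

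The genuine difference is in the key ``almost every projection is faithful'' step. The paper proves it via \emph{moments}: compact support guarantees a minimal degree $m$ with $\mu_{\boldsymbol\alpha,P}\neq\mu_{\boldsymbol\alpha,Q}$ for some $|\boldsymbol\alpha|=m$, and then $p(w)=\int (w^\top x)^m\,\d P-\int (w^\top x)^m\,\d Q$ is a nonzero homogeneous polynomial in $w$ whose zero set is Lebesgue-null (and hence meets the sphere in a Haar-null set). You prove it via \emph{characteristic functions}: compact support makes $\varphi_P-\varphi_Q$ real-analytic (Paley--Wiener), nonzero since $P\neq Q$, hence its zero set $Z$ is Lebesgue-null, and every bad $w$ lies in $Z$ because the whole ray $\R w$ does. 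Both arguments exploit compact support in the same way (to get analytic control), and both yield exactly the same conclusion; the moment route is slightly more elementary (polynomials only), while your route makes the link to Cram\'er--Wold more transparent and would extend verbatim to any $P,Q$ whose characteristic functions are real-analytic (e.g.\ sub-Gaussian), not just compactly supported ones. Your remark that Cram\'er--Wold alone is too weak here is exactly the point.
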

\begin{proof}
All proofs can be found in the appendix.
\end{proof}
\begin{toappendix}
\subsection{A proof of Theorem~\ref{thm:linear}}
Theorem~\ref{thm:linear} is actually a direct consequence of the following lemma:
\begin{lemma}
\label{lem:proj_good}
Let $X$ and $X'$ be $\R^d$-valued, compactly supported random variables. If $\P_X \neq \P_{X'}$ then the set of projections that cannot distinguish them, i.e. $\{w \in \mathbb{S}^{d-1} \:|\: \P_{w^\top X} = \P_{w^\top X'} \}$, is a null-set (with respect to the Haar measure). 
\end{lemma}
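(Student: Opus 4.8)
The plan is to reduce the lemma to a fact about zero sets of real-analytic functions and then run a homogeneity argument on the sphere. Throughout, write $\varphi_X(\xi) = \E[e^{\mathrm{i}\langle\xi,X\rangle}]$ for the characteristic function of $X$ and set $g := \varphi_X - \varphi_{X'}$. The first step is to translate the statement: the characteristic function of the scalar projection $w^\top X$ is the ray map $s \mapsto \varphi_X(sw)$, $s\in\R$, and since a probability measure on $\R$ is determined by its characteristic function, $\P_{w^\top X} = \P_{w^\top X'}$ holds if and only if $g(sw) = 0$ for every $s \in \R$. Hence the set we must control is
\[
N \;=\; \bigl\{\, w \in \mathbb{S}^{d-1} \;:\; g(sw) = 0 \ \text{ for all } s \in \R \,\bigr\}.
\]

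The second step uses compactness of the support. If $\lVert X\rVert,\lVert X'\rVert \le R$ almost surely, then $\lvert e^{\mathrm{i}\langle\xi,x\rangle}\rvert \le e^{R\,\lVert\mathrm{Im}\,\xi\rVert}$ for all $x$ with $\lVert x\rVert \le R$ and all $\xi\in\mathbb{C}^d$, so by differentiation under the integral sign $\varphi_X$, $\varphi_{X'}$, and therefore $g$, extend to entire functions on $\mathbb{C}^d$. In particular $g$ is real-analytic on $\R^d$ and its Taylor expansion $g = \sum_{k\ge 0} g_k$ about the origin, with $g_k$ homogeneous of degree $k$, converges on all of $\R^d$. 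Because $\P_X \neq \P_{X'}$ and characteristic functions are injective, $g \not\equiv 0$; hence there is a minimal index $m$ with $g_m \not\equiv 0$ (note $g_0 = g(0) = 1-1 = 0$, so $m \ge 1$).

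The third step concludes via homogeneity. Fix $w \in N$; then $0 = g(sw) = \sum_{k\ge 0} s^k\, g_k(w)$ for every $s\in\R$, and a convergent power series in $s$ that vanishes identically has all coefficients zero, so $g_k(w) = 0$ for every $k$, in particular $g_m(w) = 0$. Therefore $N \subseteq Z := \{\, w\in\mathbb{S}^{d-1} : g_m(w) = 0 \,\}$, and it suffices to show $Z$ is a null set. If it were not, the cone $C := \{\, rw : r\in[0,1],\ w\in Z \,\}$ would have positive Lebesgue measure in $\R^d$ (by polar coordinates), and by homogeneity of $g_m$ we would get $g_m \equiv 0$ on $C$; since a polynomial vanishing on a set of positive Lebesgue measure vanishes identically (a routine induction on $d$ via Fubini), this contradicts $g_m \not\equiv 0$. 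Hence $Z$, and a fortiori $N$, is Haar-null, which is the claim. Theorem~\ref{thm:linear} then follows: the directions $w \sim \Norm(0,\I)$ are, after normalization, Haar-distributed on $\mathbb{S}^{d-1}$, so with probability one $\P_{w^\top X} \neq \P_{w^\top X'}$, and two distinct laws on $\R$ are separated by a random bin with positive probability.

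The steps I expect to be routine are the translation (Step 1) and the homogeneity conclusion (Step 3); the point that needs care is the claim in Step 2 that $g$ is entire (equivalently, real-analytic on $\R^d$), which is the differentiation-under-the-integral argument and is exactly where compactness of the support — more generally, finiteness of the exponential moments near the origin — enters. It is also worth stressing that the classical Cram\'er--Wold observation alone only yields $N \neq \mathbb{S}^{d-1}$; it is the analyticity of $g$ that upgrades this to $N$ being a null set, and this refinement is what the probabilistic conclusion of Theorem~\ref{thm:linear} relies on.
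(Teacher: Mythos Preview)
Your proof is correct. It is closely related to the paper's argument but framed through characteristic functions rather than moments: the paper picks the minimal degree $m$ at which some mixed moment of $X$ and $X'$ differs and works with the polynomial $p(w)=\E[(w^\top X)^m]-\E[(w^\top X')^m]$, whereas you Taylor-expand $g=\varphi_X-\varphi_{X'}$ and take the first nonzero homogeneous part $g_m$. These are the same object up to the scalar $i^m/m!$, and both proofs then finish with the identical ``nonzero polynomial has a Lebesgue-null zero set, hence Haar-null intersection with $\mathbb{S}^{d-1}$'' step. One small advantage of your route is that the existence of a nonzero $g_m$ follows immediately from injectivity of the Fourier transform plus real-analyticity, while the paper's ``there exist moments of minimal degree which differ'' tacitly uses that compactly supported measures are determined by their moments; your argument makes that step explicit and standard. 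Conversely, the paper's version is slightly more self-contained in that it never needs to mention entire extensions or differentiation under the integral sign.
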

\begin{proof}
\newcommand{\Alpha}{\boldsymbol{\alpha}}
Denote by $\mu_{\Alpha,D} = \int \prod_i x_i^{\alpha_i} \d D(x)$ the $\Alpha$th moment of $D$. Let $P = \P_X$ and $Q = \P_{X'}$. Since $P$ and $Q$ are both compactly supported, there exist moments of minimal degree $d$ which differ for $P$ and $Q$, i.e. $\exists \Alpha \in \N_0^n: d = |\Alpha| := \sum_i \alpha_i, \: \mu_{\Alpha,P}\neq \mu_{\Alpha,Q}$. Consider the function $p(w) = \int (w^\top x)^d \d P(x) - \int (w^\top x)^d \d Q(x)$. It is easy to see that $p(\lambda w) = \lambda^d p(w)$ for any $\lambda \in \R$, so in particular the zero sets of $p$ form lines through the origin. Furthermore, by multiplying out and sorting, we see that $p$ is a polynomial whose coefficients are essentially given by the difference of the moments of $P$ and $Q$ of total degree $d$. Assuming that $p(w) = 0$ for all $w$ would imply that $p = 0$ and thus all coefficients of $p$ are zero, which is the case if and only if $\mu_{\Alpha,P} = \mu_{\Alpha,Q}$ for all $\Alpha$; this is a contradiction. Thus, there has to exist a $w$ for which $p(w) \neq 0$. As the zeros of a (non-zero) polynomial always form a Lebesgue null-set and thus cannot intersect the sphere in more then a Haar null-set due to the resulting non-null-set of the corresponding rays, the statement follows.
\end{proof}
\begin{proof}[Proof of Theorem~\ref{thm:linear}]
If has no drift, then for any projection $w$ and any split point $b$ it holds $\P[w^\top X > b \:|\: T \in W_-(t)] = \P[w^\top X > b \:|\: T \in W_+(t)]$. We may compare the empirical estimates of those probabilities using a statistical test and then consider the (inverse) $p$-value minus a constant that relates to the desired threshold.

So assume $p_t$ has drift.
By \cite[Theorem 2]{conttime} there exists a $t$, such that $p_{W_-(t)} \neq p_{W_+(t)}$; it then follows from Lemma~\ref{lem:proj_good} that the set of all projections for which both measures coincide is a Haar zero set. As scaling does not change this we may sample our projection vectors from a normal distribution. 

Let $F$ and $G$ denote the cdfs of the projected $p_{W_-(t)}$ and $p_{W_+(t)}$, respectively. As $F \neq G$ there exists a $b$ such that $F(b) \neq G(b)$; we will prove, that there exists some $\epsilon > 0$ such that $F(b') \neq G(b')$ for all $b' \in [b,b+\epsilon)$ which implies the statement. Assume this is not the case, then for every $\epsilon > 0$ there exists a point $\xi$ with $b < \xi < b+\epsilon$ and $F(\xi) = G(\xi)$. We may use those $\xi$ to construct a sequence $x_n$ such that $x_n > x_{n+1} > b$, $x_n \to b$ and $F(x_n) = G(x_n)$ (start with an arbitrary $\epsilon$ to find $x_0 = \xi$, then choose $x_{n+1}$ as the $\xi$ obtained for $\epsilon = (x_n-b)/2$). Since $F$ and $G$ are cadlag, it holds that $F(b) = \lim_n F(x_n) = \lim_n G(x_n) = G(b)$, which is a contradiction. However, $[b,b+\epsilon)$ is a Lebesgue non-null-set, thus with probability larger than zero we pick a random split $b$ such that $\P[w^\top X > b \:|\: T \in W_-(t)] \neq \P[w^\top X > b \:|\: T \in W_+(t)]$. We may now proceed as in the non-drifting case. 
\end{proof}
\end{toappendix}

However, we will observe that they do not perform well for a  large  dimensionality. We conjecture that this is a consequence of the fact that they are not  arrival time respecting, and not adapted to the specific problem at hand.

\paragraph{Learneable models: } Many popular machine learning models  are also applied to estimate similarities in drift detection. Interestingly, the uniform learnability that qualifies them as valid machine learning models, also assures that the derived estimators are surely drift detecting and precise:

\begin{theorem}
\label{thm:universal}
\renewcommand{\H}{\Hypo}
Let $\X$ be a measurable space 
and let 
$\H$ be a hypothesis class of binary classifiers on $\X$.
Consider the estimators induced by
\begin{align*}
    \frac{1}{2} - \inf_{h \in \H} \E[\ell_w(h,(X,\mathbf{1}[T \in W_+(t)]))],
\end{align*}
where $\ell_w$ denotes the 0-1-loss with class reweighting, i.e. $\ell_w(y',(x,y)) = 1/\P[Y = y]$ if $y \neq y'$ and 0 otherwise.
If $\H$ is PAC-learnable, then the estimator is precise. 
If in addition, 
for all binary classification tasks on $\X$
there exists a $h \in \H$ that performs better than random,
then the estimator is also surely drift detecting. 
\end{theorem}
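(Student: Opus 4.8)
The plan is to first analyse the population version of the statistic, $g^\ast(t) := \tfrac12 - \inf_{h\in\Hypo} L_t(h)$, where $L_t(h) := \E[\ell_w(h,(X,\mathbf{1}[T\in W_+(t)]))]$ is the class-balanced misclassification risk of $h$ on the auxiliary task ``guess from $X$ whether $T$ lies after the split $t$'', and then to transfer it to the empirical estimator, using that a PAC-learnable binary class has finite VC dimension and hence satisfies uniform convergence. Writing $P^-_t := p_{W_-(t)}$ and $P^+_t := p_{W_+(t)}$ for the sub-window distributions, a one-line computation gives $L_t(h) = \tfrac12(\P_{X\sim P^-_t}[h(X)=1]+\P_{X\sim P^+_t}[h(X)=0])$. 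Hence, if there is no drift then $P^-_t = P^+_t$ for all $t$, so $X$ is independent of the auxiliary label and $L_t(h)\equiv\tfrac12$ over all of $\Hypo$, i.e.\ $g^\ast\equiv 0$; if there is drift then \cite[Theorem 2]{conttime} supplies a $t_\ast$ with $P^-_{t_\ast}\neq P^+_{t_\ast}$, and the extra ``better than random'' hypothesis then yields an $h\in\Hypo$ with $L_{t_\ast}(h)<\tfrac12$, i.e.\ $g^\ast(t_\ast)>0$. Thus $\sup_t g^\ast(t)>0$ exactly when there is drift.

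Next I would pass to finite windows. A window $\D$ with $n$ points admits at most $n+1$ genuinely different splits (indexed by how many points land in $\D_-$), and conditionally on a split the points of $\D_-$ and $\D_+$ are i.i.d.\ from $P^-_t$ and $P^+_t$. Finite VC dimension then gives, with probability $\ge 1-\delta$ and simultaneously over all non-lopsided splits, $|\inf_{h\in\Hypo}\widehat{\E}[\ell_w(h,\cdot)] - \inf_{h\in\Hypo} L_t(h)| \le c_n$, with $c_n\to 0$ at the usual $\sqrt{(\log n)/m}$ rate on a split carrying $m$ points (the union bound over the $\le n+1$ splits contributing only the $\log n$ factor). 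The few lopsided splits, on which the class reweighting is unbounded, are excluded; this is harmless for $n$ large. I would take the estimator to be $\hat d(\D_-(t),\D_+(t)) := \tfrac12 - \inf_{h\in\Hypo}\widehat{\E}[\ell_w(h,\cdot)] - c_n$ (equivalently, comparison of the balanced risk against $\tfrac12$ at the significance level of the associated test, cf.\ stage 4), so that $\hat d(t)\in[g^\ast(t)-2c_n,\,g^\ast(t)]$ at every split with probability $\ge 1-\delta$. Then: under no drift $\hat d\le g^\ast\equiv 0$ throughout; under drift $\hat d(\D_-(t_\ast),\D_+(t_\ast))\ge g^\ast(t_\ast)-2c_n$, which is positive once $2c_n<g^\ast(t_\ast)$. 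Since $\delta$ was arbitrary, the estimator is surely drift detecting.

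Turning to precision, I would take a single abrupt change $P\to Q$ at $t_0$ and set $\gamma := g^\ast(t_0)$; if $\gamma=0$ then drift is declared only with vanishing probability and the claim is vacuous, so assume $\gamma>0$. For $t<t_0$ one has $P^-_t = P$ and $P^+_t = \alpha_t P + (1-\alpha_t)Q$ with $\alpha_t := P_T((t,t_0])/P_T((t,\infty))$, so $L_t(h) = \alpha_t\cdot\tfrac12 + (1-\alpha_t)L_{t_0}(h)$ is a convex combination of $\tfrac12$ and $L_{t_0}(h)$; taking $\inf_{h\in\Hypo}$ (legitimate as $1-\alpha_t\ge 0$) gives $g^\ast(t)=(1-\alpha_t)\gamma$, and symmetrically $g^\ast(t)=\beta_t\gamma$ with $\beta_t\le 1$ for $t>t_0$, while $g^\ast(t_0)=\gamma$. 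So if $\hat{t_0}$ is ``$\epsilon$-bad'', meaning $P_T$ puts mass $\ge\epsilon$ on the interval between $t_0$ and $\hat{t_0}$ of the precision definition, then the relevant factor is $\ge\epsilon$ (using $P_T((\hat{t_0},\infty))\le 1$, resp.\ $P_T((-\infty,\hat{t_0}])\le 1$), whence $g^\ast(\hat{t_0})\le(1-\epsilon)\gamma$. On the other hand, the split that places exactly the pre-drift points into $\D_-$ has both $t_0$ and its own $\hat{t_0}$-value inside a single inter-sample interval, whose $P_T$-mass — the precision defect of this split — is below $\epsilon$ with probability $\ge 1-2(1-\epsilon/2)^n$. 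Choosing $n$ large enough that also $c_n<\gamma\epsilon/8$, the bound of the previous step gives $\hat d\ge(1-\epsilon/2)\gamma-2c_n$ at this good split but $\hat d\le g^\ast\le(1-\epsilon)\gamma$ at every $\epsilon$-bad split, and the former strictly exceeds the latter; hence $\arg\max_t\hat d$ is never $\epsilon$-bad, i.e.\ the precision exceeds $1-\epsilon$, with probability $\ge 1-\delta$.

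The hard part is the transfer step. PAC learnability is a per-distribution guarantee, yet here it must be invoked uniformly over the $\Theta(n)$ empirical split tasks, whose sample sizes and class ratios vary with the split, so the class-reweighted loss is only uniformly bounded after removing the lopsided splits — which is legitimate because, for $n$ large, the removed ones omit a detecting split (under drift) and a near-$t_0$ split (for precision), provided the size threshold is let vanish slowly; moreover the computed statistic needs a data-dependent threshold (the $-c_n$ above, playing the role of the test's critical value) that at once suppresses false alarms under the null and is still cleared by the $\Theta(1)$ population margin $g^\ast(t_\ast)$ under drift. By contrast, the population identity (the reduction to \cite[Theorem 2]{conttime} plus the elementary independence argument) and the peakedness computation behind precision are routine.
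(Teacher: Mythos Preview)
Your proposal follows the same overall strategy as the paper: analyse the population functional $g^\ast(t)$, use PAC-learnability (hence finite VC dimension) to obtain uniform convergence after excluding lopsided splits, and deduce drift detection and precision. The population analysis is correct, and your convex-combination identity $L_t(h)=\alpha_t\cdot\tfrac12+(1-\alpha_t)L_{t_0}(h)$, leading to $g^\ast(t)=(1-\alpha_t)\gamma$, is in fact a cleaner and more quantitative precision argument than the paper's, which simply asserts that $|p_{W_-(t)}(A)-p_{W_+(t)}(A)|$ is maximal at $t=t_0$ for every $A$ without tracking the defect away from $t_0$.

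There is, however, a genuine technical wrinkle in your transfer step. You say the window admits at most $n+1$ genuinely different splits and that ``conditionally on a split the points of $\D_-$ and $\D_+$ are i.i.d.\ from $P^-_t$ and $P^+_t$'', and then propose a union bound over those $n+1$ splits. The conditional-i.i.d.\ statement is true for a \emph{fixed} deterministic $t$, but the $n+1$ splits you enumerate sit at the order statistics of the $T_i$, which are data-dependent; a per-$t$ VC bound followed by a union bound over these random split points is not directly justified. The paper handles exactly this point differently: it passes to the product class $\{(x,s)\mapsto h(x)\mathbf{1}_{W_-(t')}(s):h\in\Hypo,\,t'\in\T\}$, whose growth function on an $n$-point sample is bounded by $|\Hypo_{|D_\X}|\cdot(n+1)$, and applies Vapnik's inequality once to obtain uniform convergence over $h$ \emph{and} $t$ simultaneously; the class-reweighting denominators are controlled separately via the Dvoretzky--Kiefer--Wolfowitz inequality together with Lipschitz continuity of $x\mapsto 1/x$ on the non-lopsided region $\hat{\T}(\tau|\D)$, with an explicit schedule $\tau_n\to 0$. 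With that substitution, the rest of your argument goes through. Your instinct that ``the hard part is the transfer step'' is correct, but the right device is the product-class trick rather than a naive union bound over data-determined splits.
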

\begin{toappendix}
\renewcommand{\H}{\Hypo}
\subsection{A proof of Theorem~\ref{thm:universal} and Corollary~\ref{cor:Tree}}
\begin{proof}[Proof of Theorem~\ref{thm:universal}]
We first show that there exists an empirical estimator that converges (nearly) uniformly over $\T \times \H$ to $\E[\ell_w(h,\mathbf{1}[T \in W_+(t)])]$ as $|\D| \to \infty$. Then we conclude that this implies the statement. 

\textbf{Step 1: Convergence}
Interpret $\H$ as a set of functions from $\X$ to $\{-1,1\}$.
We may rewrite 
\begin{align*}
    \E[\ell_w(h,\mathbf{1}[T \in W_+(t)])] 
      &= \frac{1}{2}(\P[h(X) = 1 | T \in W_-(t)] + \P[h(X) = -1 | T \in W_+(t)])
    \\&= \frac{1}{2}\left(\E\left[\left.\frac{h(X)+1}{2}\right| T \in W_-(t)\right] - \E\left[\left.\frac{h(X)-1}{2} \right| T \in W_+(t)\right]\right)
    \\&= \frac{1}{4} \left(\E\left[\left.h(X)\right| T \in W_-(t)\right] - \E\left[\left.h(X)\right| T \in W_+(t)\right]\right) + \frac{1}{2}
\end{align*}
and then use the canonical estimator
\begin{align*}
    \frac{1}{|\D_\pm(t)|}\sum_{x \in \D_\pm(t)} h(x) \approx \E[h(x) | T \in W_\pm(t) ],
\end{align*}
for each of the expectations separately. 

Denote by $n = |\D|$, $F_T$ the cdf of $P_T$, by $\hat{F}_T(t|\D) = |\D_-(t)|/n$ its canonical estimator, and by
\begin{align*}
    \hat{\T}(\tau|\D) := 
    \left(\hat{F}_T(\cdot|\D)\right)^{-1}([\tau,1]) \setminus \left(\hat{F}_T(\cdot|\D)\right)^{-1}([1-\tau,1])
\end{align*} 
In particular, $\{\hat{F}_T(t|\D) \:|\: t \in \hat{\T}(\tau|\D)\} \subset [\tau,1-\tau]$ for all $\tau \in (0,1/2)$.
We will show that there exists a sequence $\tau_n$ such that the canonical estimator converges uniformly over $\hat{\T}(\tau_n|\D) \times \H$ and $\tau_n \to 0$ as $n \to \infty$.

Fix a $0 < \tau < 1/2$ and $\lambda > 1$. For $A_{\tau,\lambda} = \{ \{ F_T(t) \:|\: t \in \hat{\T}(\tau|\D)\} \subset [\tau/\lambda,1-\tau/\lambda] \}$ it holds
\begin{align*}
    \P(A^C_{\tau,\lambda}) 
    &= \P\left(\left\{\inf_{t \in \hat{\T}(\tau|\D)}F_T(t) < \tau/\lambda \right\} \cup \left\{\sup_{t \in \hat{\T}(\tau|\D)}F_T(t) > 1-\tau/\lambda\right\} \right)
    \\&= \P(\{F_T(\inf\hat{\T}(\tau|\D)) < \tau/\lambda  \} \cup \{F_T(\sup \hat{\T}(\tau|\D)) > 1-\tau/\lambda \} )
    \\&= \P(\{F_T(\inf\hat{\T}(\tau|\D)) - \tau < \tau/\lambda -\tau \} \\&\quad\cup \{F_T(\sup \hat{\T}(\tau|\D)) - (1-\tau) > 1-\tau/\lambda - (1-\tau) \} )
    \\&\leq \P(\{F_T(\inf\hat{\T}(\tau|\D)) - \underbrace{\hat{F}_T(\inf\hat{\T}(\tau|\D))}_{\geq \tau} < \underbrace{\tau/\lambda -\tau}_{=-\tau(1-1/\lambda)} \} \\&\quad\cup \{F_T(\sup \hat{\T}(\tau|\D)) - \underbrace{\hat{F}_T(\sup\hat{\T}(\tau|\D))}_{\leq 1-\tau} > \underbrace{1-\tau/\lambda - (1-\tau)}_{=\tau(1-1/\lambda)} \} )
    \\&\leq \P\left[ \max_{t \in \{ \inf\hat{\T}(\tau|\D)), \sup\hat{\T}(\tau|\D)) \}} |F_T(t)-\hat{F}_T(t|\D)| \geq \tau\left(1-\frac{1}{\lambda}\right) \right]
    \\&\leq \P\left[ \sup_{t \in \hat{\T}(\tau|\D)} |F_T(t)-\hat{F}_T(t|\D)| \geq \tau\left(1-\frac{1}{\lambda}\right) \right]
    \\&\leq \P\left[ \sup_{t \in \T} |F_T(t)-\hat{F}_T(t|\D)| \geq \tau\left(1-\frac{1}{\lambda}\right) \right]
    \\&\leq 2\exp(-2n \tau^2 (1-1/\lambda)^2 ),
\end{align*}
where the last inequality is the Dvoretzky-Kiefer-Wolfowitz inequality and the second equality follows from the fact that $F_T$ is monotonously increasing.

Now for all $\omega \in A_{\tau,\lambda}$, for all $t \in \hat{\T}(\tau|\D)$ it holds $\P[T \in W_\pm(t)] \geq \tau/\lambda, |\D_\pm(t)|/n \geq \tau$. Furthermore, for all $h \in \H$ it holds
\begin{align*}
    &\left|\frac{1}{|\D_\pm(t)|}\sum_{x \in \D_\pm(t)} h(x) - \E[h(X)|T \in W_\pm(t)] \right| 
    \\&\leq \left|\left(\frac{|\D_\pm(t)|}{n}\right)^{-1} \cdot \frac{1}{n} \sum_{i = 1}^n h(x_i)\I_{W_\pm(t)}(t_i) - \P[T \in W_\pm(t)]^{-1} \cdot \E[h(X)\I_{W_\pm(t)}(T)] \right| 
    \\&\leq \left|\left(\frac{|\D_\pm(t)|}{n}\right)^{-1}  - \P[T \in W_\pm(t)]^{-1} \right| \cdot \underbrace{\left| \E[h(X)\I_{W_\pm(t)}(T)] \right|}_{ \leq 1}
    \\&+ \left|\frac{1}{n} \sum_{i = 1}^n h(x_i)\I_{W_\pm(t)}(t_i) - \E[h(X)\I_{W_\pm(t)}(T)] \right| \underbrace{\left(\frac{|\D_\pm(t)|}{n}\right)^{-1}}_{\leq 1/\tau},
\end{align*}
which can be seen using the triangle and Jensen's inequality.
As $x \mapsto 1/x$ and $x \mapsto 1/(1-x)$ are Lipschitz continuous with Lipschitz constant $\lambda^2/\tau^2$ on $[\tau/\lambda,1-\tau/\lambda]$, we may upper bound the first summand by
\begin{align*}
    \left|\left(\frac{|\D_-(t)|}{n}\right)^{-1}  - \P[T \in W_-(t)]^{-1} \right| &\leq \lambda^2/\tau^2 \cdot \left|\frac{|\D_-(t)|}{n}  - \P[T \in W_-(t)] \right| & \text{ and } \\
    \left|\left(\frac{|\D_+(t)|}{n}\right)^{-1}  - \P[T \in W_+(t)]^{-1} \right| &\leq \lambda^2/\tau^2 \cdot \left|\frac{|\D_+(t)|}{n}  - \P[T \in W_+(t)] \right| \\&= \lambda^2/\tau^2 \cdot \left|\frac{|\D_-(t)|}{n}  - \P[T \in W_-(t)] \right|,
\end{align*}
using that ${|\D_+(t)|}/{n} = 1-{|\D_-(t)|}/{n}$ and $\P[T \in W_+(t)] = 1-\P[T \in W_-(t)]$. From the Dvoretzky-Kiefer-Wolfowitz it follows
\begin{align*}
    &\P\left[\sup_{t \in \hat{\T}(\tau|\D) , h \in \H} \left|\left(\frac{|\D_-(t)|}{n}\right)^{-1}  - \P[T \in W_-(t)]^{-1} \right| \right.\\&+\left. \left|\left(\frac{|\D_+(t)|}{n}\right)^{-1}  - \P[T \in W_+(t)]^{-1} \right| \geq \varepsilon, A_{\tau,\lambda} \right]
    \\&\leq \P\left[2\lambda^2/\tau^2  \sup_{t \in \T} \left|\frac{|\D_-(t)|}{n}  - \P[T \in W_-(t)]\right| \geq \varepsilon, A_{\tau,\lambda} \right]
    \\&= \P\left[\sup_{t \in \T} \left|\frac{|\D_-(t)|}{n}  - \P[T \in W_-(t)]\right| \geq 2\lambda^2/\tau^2 \varepsilon, A_{\tau,\lambda} \right]
    \\&\leq \P\left[\sup_{t \in \T} \left|\frac{|\D_-(t)|}{n}  - \P[T \in W_-(t)]\right| \geq 2\lambda^2/\tau^2 \varepsilon \right]
    \\&\leq 2\exp(-2n(2 \varepsilon \tau^2 / \lambda^2)^2)
\end{align*}

Now, consider the induced hypothesis class 
\begin{align*}
\H_-(t) &:= \{(x,s) \mapsto h(x)\I_{W_-(t')}(s) \:|\: t' \leq t \}, & \text{and}\\ \hat{\H}_-(\tau|\D) &= \H_-(\inf \hat{\T}(\tau|\D)).
\end{align*}
Notice, that for $t < t'$ we have $\H_-(t) \subset \H_-(t') \subset \H_-(1) =: \H_-$. Furthermore, for a set $D \subset \X \times \T$ with projections $D_\X,D_\T$ we have $|(\H_-(t))_{|D}| \leq |\H_{|D_\X}| \cdot (|D_\T|+1)$ for all $t$, so in particular for all $\hat{\H}_-(\tau|\D)$ independent of $\D$, up to $|\D|$, and $\tau$. 
In particular, we can upper bound the second summand using Vapnik's inequality w.r.t. $\hat{\H}_-(\tau|\D)$ by
\begin{align*}
    &\P\left[ \sup_{t \in \hat{\T}(\tau|\D) , h \in \H}\left|\frac{1}{n} \sum_{i = 1}^n h(x_i)\I_{W_-(t)}(t_i) \right.\right.\\&\qquad\qquad\qquad-\left. \E[h(X)\I_{W_-(t)}(T)] \Bigg| \P[T \in W_-(t)]^{-1} \geq \varepsilon , A_{\tau,\lambda}\right] 
    \\&\leq \P\left[ \sup_{t \in \hat{\T}(\tau|\D) , h \in \H}\left|\frac{1}{n} \sum_{i = 1}^n h(x_i)\I_{W_-(t)}(t_i) - \E[h(X)\I_{W_-(t)}(T)] \right| \geq \varepsilon \tau / \lambda , A_{\tau,\lambda}\right] 
    \\&= \P\left[ \sup_{h' \in \hat{\H}_-(\tau|\D)}\left|\frac{1}{n} \sum_{i = 1}^n (h'(x_i,t_i)+1) - \E[(h'(X,T)+1)] \right| \geq \varepsilon \tau/\lambda , A_{\tau,\lambda}\right] 
    \\&\leq \P\left[ \sup_{h' \in \hat{\H}_-(\tau|\D)}\left|\frac{1}{n} \sum_{i = 1}^n (h'(x_i,t_i)+1) - \E[(h'(X,T)+1)] \right| \geq \varepsilon \tau/\lambda \right] 
    \\&\leq 4\Gamma(2n) \exp(-n(\varepsilon \tau / \lambda)^2/72)
\end{align*}
Notice, that all those statements also hold for the analogous hypothesis class $\H_+(t)$ (defined using $W_+(t)$) with $t > t'$ and $0$ (instead of $1$) which corresponds to the approximation of $\E[h(X) | T \in W_+(t)]$.

Combining both bounds, we obtain an upper bound for the error
\begin{align*}
    & \P\left[\left.\sup_{t \in \hat{\T}(\tau|\D)}\sup_{h \in \H} \right|\E[\ell_w(h,(X,\mathbf{1}[T \in W_+(t)]))] \right.\\&\qquad\qquad\qquad-\left.\left. \frac{1}{n} \sum_{i = 1}^n \ell_w(h,(x_i,\mathbf{1}[t_i \in W_+(t)])|\D) \right| \geq \varepsilon_1+\varepsilon_2\right]
    \\&\leq \P\left[\left.\sup_{t \in \hat{\T}(\tau|\D)}\sup_{h \in \H} \right|\E[\ell_w(h,(X,\mathbf{1}[T \in W_+(t)]))] \right.\\&\qquad\qquad\qquad-\left.\left. \frac{1}{n} \sum_{i = 1}^n \ell_w(h,(x_i,\mathbf{1}[t_i \in W_+(t)])|\D) \right| \geq \varepsilon_1+\varepsilon_2, A_{\tau,\lambda} \right] + \P(A_{\tau,\lambda}^C)
    \\&\leq 2 \cdot 4\Gamma(2n) \exp(-n(\varepsilon_1 \tau / \lambda)^2/72) + 2\exp(-2n(2 \varepsilon_2 \tau^2 / \lambda^2)^2) \\&+ 2\exp(-2n \tau^2 (1-1/\lambda)^2 ).
\end{align*}
Setting $\lambda = \sqrt{\tau}, \varepsilon_1 = \sqrt{144} \cdot \tau, \varepsilon_2 = \sqrt{\tau}/2$ and assuming $0 \leq \tau \leq (3-\sqrt{5})/2$ we obtain
\begin{align*}
    &\leq 2 \cdot 4\Gamma(2n) \exp(-2n\tau^3) + 2\exp(-2 n \tau^3) + 2\exp(-2n \underbrace{\tau^2 (1-1/\sqrt{\tau})^2}_{\geq \tau^3} )
    \\&\leq 2 \cdot (4\Gamma(2n)+2) \cdot \exp(-2n\tau^3).
\end{align*}
Thus, by setting $\tau_n = (3-\sqrt{5})/(2\sqrt[4]{n})$ we have $\tau_n \to 0$, $\varepsilon_n = \sqrt{144} \tau_n+\sqrt{\tau_n}/2 \to 0$ and
\begin{align*}
    & \P\left[\sup_{t \in \hat{\T}(\tau_n|\D)}\sup_{h \in \H} \right|\E[\ell_w(h,(X,\mathbf{1}[T \in W_+(t)]))]  \\&\qquad\qquad\qquad-\left.\left. \frac{1}{n} \sum_{i = 1}^n \ell_w(h,(x_i,\mathbf{1}[t_i \in W_+(t)])|\D) \right| \geq \varepsilon_n\right] \\&\leq 2 \cdot (4\Gamma(2n)+2) \cdot \exp\left(-18 n^\frac{7}{4}\right) \to 0
\end{align*}
as $n \to \infty$ since $\log(\Gamma(2n)) \in \O(\log(n))$, due to Sauer's lemma and the fundamental theorem of learning theory. 

\textbf{Step 2: Application}
If there is no drift, then the above estimator will tend to zero. If there is drift, then there is a $t_0$ with $0 < F_T(t_0) < 1$ such that the distributions before and after differ. Since $\tau_n \to 0$ there exists an $n$ such that $t_0 \in \hat{\T}(\tau_n|\D)$ with probability at least $1-\delta/3$ so that the estimate will converge to a value larger or equal to \begin{align*}
    \sup_{h \in \H} \left| \int h(x) \d p_{W_-(t)}(x) - \int h(x) \d p_{W_+(t)}(x) \right| \overset{!}{\geq} 0
\end{align*}
and in particular eventually exceed $\varepsilon_n$ with probability at least $1-\delta/3$, assuming $\H$ provides estimators that are better than a random guess so that $>$ holds in $!$. So we detect the drift with at least $1-2/3 \delta$.

On the other hand, if there is only one drift point, then the difference $|p_{W_-(t)}(A)-p_{W_+(t)}(A)|$ is maximal at $t_0 = t$ for any $A$. It follows that the estimator is precise.
\end{proof}
\end{toappendix}

To connect this result to the existing literature, observe that (for $\X = \R^d$ and universal $\Hypo$) the estimator is equivalent to the total variation norm.
%
%
Thus, if an estimator is based on a uniform learneable model class, it is surely drift detecting and precise, but in general this requires us to retrain the model for each  split point $t$. At this point the fact that  some models do not need adaptation can increase efficiency. Indeed, for Random Trees and $kdq$-Trees we find the following statements:
\phantom{
\cite{biau2008consistency}
}

\begin{corollary}
\label{cor:Tree}
On $\X = [0,1]^d$ 
Random Trees and $kdq$-Trees with total variation norm are surely drift detecting, precise, and $\O(1)$-complex with cumulative histograms as descriptors.
\end{corollary}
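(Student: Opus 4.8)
The plan is to recognise the proposed estimator as the one from Theorem~\ref{thm:universal}, with the single proviso that the hypothesis class be allowed to grow with the sample size. Fix a window $\D$ with $n=|\D|$, let $\mathcal P_n$ be the partition of $[0,1]^d$ into the leaves of the Random Tree resp.\ $kdq$-Tree built on $\D$, and let $\Hypo_n$ be the finite class of binary classifiers constant on the cells of $\mathcal P_n$. The computation opening the proof of Theorem~\ref{thm:universal} then gives that $\tfrac12-\inf_{h\in\Hypo_n}\E[\ell_w(h,(X,\mathbf 1[T\in W_+(t)]))]$ is, up to a fixed positive constant, $\sum_{B\in\mathcal P_n}|p_{W_-(t)}(B)-p_{W_+(t)}(B)|$, i.e.\ the total-variation distance of the two sub-window distributions binned along $\mathcal P_n$ --- exactly the quantity the corollary refers to. For the complexity claim, take $A_0(\D)$ to be the tree together with, per leaf $B$, the cumulative histogram $t\mapsto|\{(x',t')\in\D: x'\in B,\ t'\le t\}|$; this is independent of the split point, so $A=A_0\times\mathrm{id}_\T$, and given $A_0(\D)$ the leaf counts $|\D_\pm(t)\cap B|$ and hence the binned total variation are read off in constant time per cell, so the estimator is $\O(1)$-complex with cumulative histograms as descriptors.

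For the statistical claims I would re-run Step~1 of the proof of Theorem~\ref{thm:universal} essentially verbatim. The treatment of the reweighting and of the temporal split point --- the good event $A_{\tau,\lambda}$, the Dvoretzky-Kiefer-Wolfowitz bound on $\hat F_T$, the Lipschitz control of $\P[T\in W_\pm(t)]^{-1}$, and the reduction of the time-augmented class to $\Hypo_n$ times a factor $|D_\T|+1$ --- carries over unchanged. The only change is in Vapnik's inequality: the growth function of a fixed VC class is replaced by that of $\Hypo_n$, which is at most $2^{|\mathcal P_n|}$ on any sample, so the final bound gains a factor $2^{|\mathcal P_n|}$. Growing the tree so that $|\mathcal P_n|\to\infty$ while $|\mathcal P_n|=o(n^{1/4})$ (e.g.\ to depth $\approx\tfrac1{8d}\log_2 n$ for $kdq$-Trees, or with $\approx n^{1/8}$ random cuts for Random Trees; more generally $|\mathcal P_n|\log n/n\to0$ with an adapted $\tau_n$) keeps the bound $\to0$, so the empirical estimate converges uniformly over the admissible split points to the true binned total variation, with a detection threshold $\varepsilon_n\to0$. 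For Random Trees the partition is itself random, but this bound is uniform over all tree realisations, so it causes no difficulty.

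It remains to pass from the binned total variation to the genuine discrepancy of the sub-window distributions, which is where the refinement property of the two constructions enters: for $kdq$-Trees the cells are dyadic boxes of vanishing side length, and for Random Trees the diameter of the cell containing any fixed point tends to $0$ as the number of cuts grows --- the classical consistency property of purely random trees, cf.~\cite{biau2008consistency}. Hence whenever $P\neq Q$ on $[0,1]^d$, for all large $n$ some dyadic box (resp.\ a continuous separating function approximated on the cells) witnesses $\sum_{B\in\mathcal P_n}|P(B)-Q(B)|\ge c>0$; applied with $P=p_{W_-(t_0)}$, $Q=p_{W_+(t_0)}$ for a $t_0$ with $0<P_T(W_-(t_0))<1$ --- which exists exactly when there is drift, by \cite[Theorem~2]{conttime} --- and combined with the uniform convergence and $\varepsilon_n\to0$ above, the estimate exceeds the threshold with probability at least $1-\delta$ once $n$ is large enough, for every $\delta$, so the estimator is surely drift detecting. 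Precision is inherited from Theorem~\ref{thm:universal}: for a single abrupt drift $|p_{W_-(t)}(B)-p_{W_+(t)}(B)|=(1-\alpha(t))\,|P(B)-Q(B)|$ with $\alpha(t)$ continuous and minimised at $t=t_0$, so the true binned total variation is maximised at $t_0$ and decreases away from it, whence the argmax of the uniformly close empirical version localises $t_0$ up to $P_T$-measure $\varepsilon$.

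The main obstacle is the joint requirement in the last two paragraphs: the uniform law of large numbers has to hold simultaneously over the \emph{growing} leaf class $\Hypo_n$ and over all admissible split points (and, for Random Trees, in probability over the random partition), which is precisely what pins down the admissible growth rate of $|\mathcal P_n|$; and one must actually invoke a purely-random-tree consistency result to know that the ``completely random'' splitting rule produces cells of vanishing diameter, rather than treating it as evident.
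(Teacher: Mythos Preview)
Your approach is sound and arrives at the same destination as the paper, via a closely related route. Both arguments factor through Theorem~\ref{thm:universal} with the hypothesis class generated by the tree leaves, and both rely on the consistency of purely random (resp.\ $kdq$-) trees from~\cite{biau2008consistency} to supply a better-than-random classifier. The difference is one of economy: the paper exploits that for both constructions the leaf map $l$ is chosen \emph{independently of the data} (random splits, resp.\ midpoint splits), so that conditionally on $l$ the class $\Hypo_l=\{v\circ l:v\text{ any value map}\}$ is fixed and finite and Theorem~\ref{thm:universal} applies as a black box; the nesting $\Hypo_{l'}\subset\Hypo_l$ for subtrees $l'$ then upgrades consistency in probability to the almost-sure existence of a drift-separating hypothesis at some depth. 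You instead treat the class as genuinely growing with $n$ and re-derive Step~1 with a $2^{|\mathcal P_n|}$ growth-function bound, which forces you to tune $|\mathcal P_n|$ against the Vapnik rate. This is not wrong --- and it is arguably more honest about the fact that the sample-size threshold in Theorem~\ref{thm:universal} depends on the class and hence implicitly on the tree depth --- but the data-independence of $l$ is precisely the shortcut that lets the paper keep the argument to a few lines rather than a second theorem.
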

\begin{toappendix}
\begin{proof}[Proof of Corollary~\ref{cor:Tree}]
We can consider a decision tree as a composition $h = v \circ l$ of a map $l : \X \to \{1,...,n\}$, assigning every point to the leaf it belongs to, and a map $v : \{1,...,n\} \to \{0,1\}$, assigning a value to every leaf. For a fixed tree structure $l$, we can obtain a hypothesis class $\H_l$ by composing $l$ with every possible value map $v$. As $\H_l$ is always finite and $l$ is constructed independently of $\D$, we can apply Theorem~\ref{thm:universal} to show the statement. Thus, it remains to show that $\H_l$ contains a model capable of detecting the drift $\P$-a.s., but since we know that both models are consistent (consider the proof of \cite[Theorem 2]{biau2008consistency} for Random Trees and notice that the proof can easily be adapted for $kdq$-Trees) and convergence in probability implies convergence a.s. along a subsequence, which is equivalent in this case as if $l'$ is a sub-tree of $l$ then $\H_{l'} \subset \H_l$, this is clear.
\end{proof}
\end{toappendix}

To obtain a similar result for Moment Trees, we make use of the fact that they can be used for conditional density estimation~\cite{momentTree}: 
The obtained tree is suitable for all classification task for the form $\mathbf{1}[T > t]$, which is exactly what is considered by Theorem~\ref{thm:universal}. We therefore conjecture that Moment Trees with total variation norm is drift detecting, precise, arrival time respecting, and $\O(1)$-complex with cumulative histograms as descriptors.

\section{Empirical Evaluation}
\label{sec:exp}

\begin{figure}[t]
    \centering
    \begin{minipage}[b]{0.46\textwidth} 
    \centering
    \includegraphics[width=\textwidth]{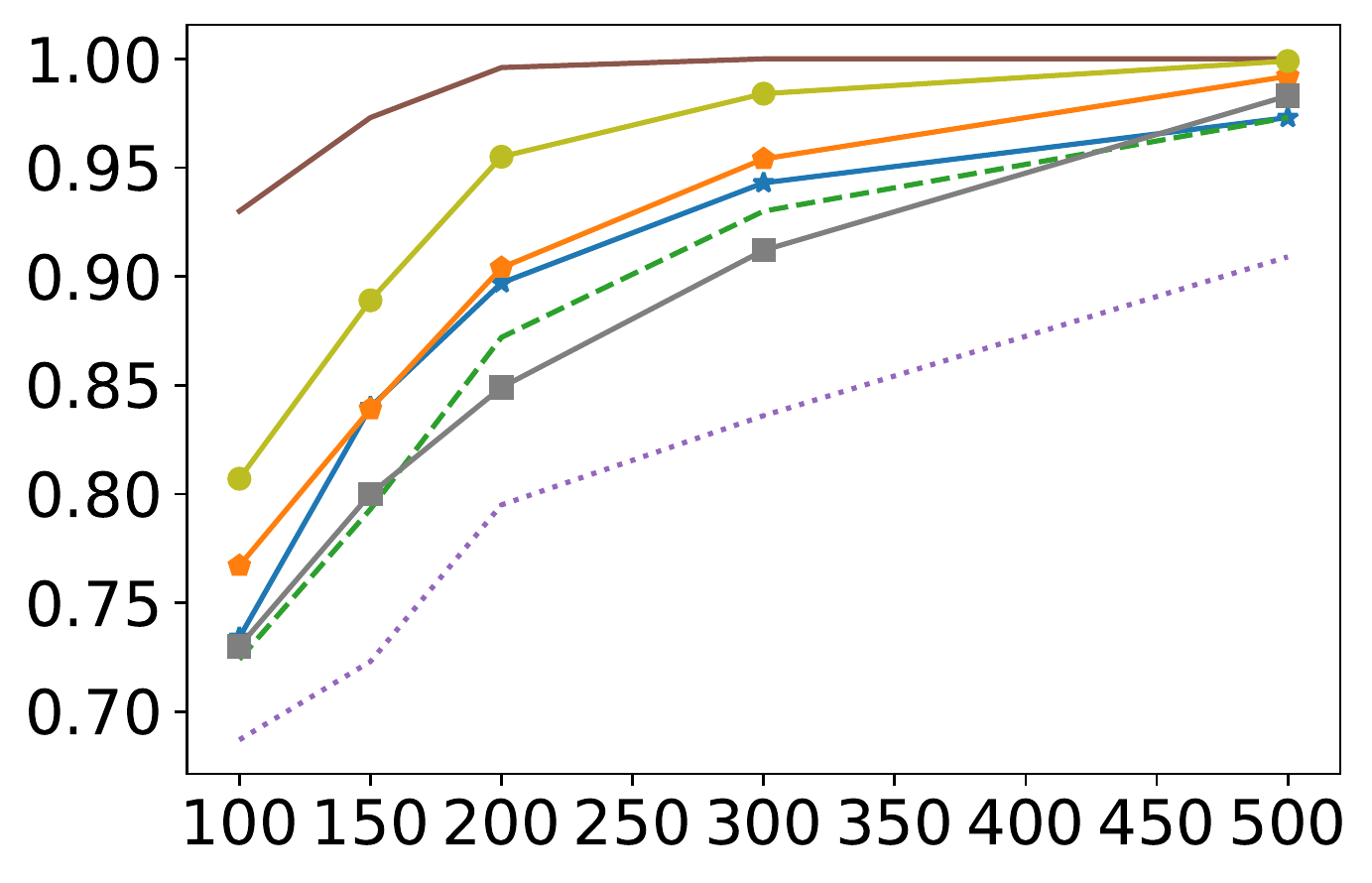}
    \subcaption{Effect of window length on $p_\text{perm}$ on weather dataset. }
    \label{fig:exp1}
    \end{minipage}
    \hfill
    \begin{minipage}[b]{0.46\textwidth}
    \centering
    \includegraphics[width=\textwidth]{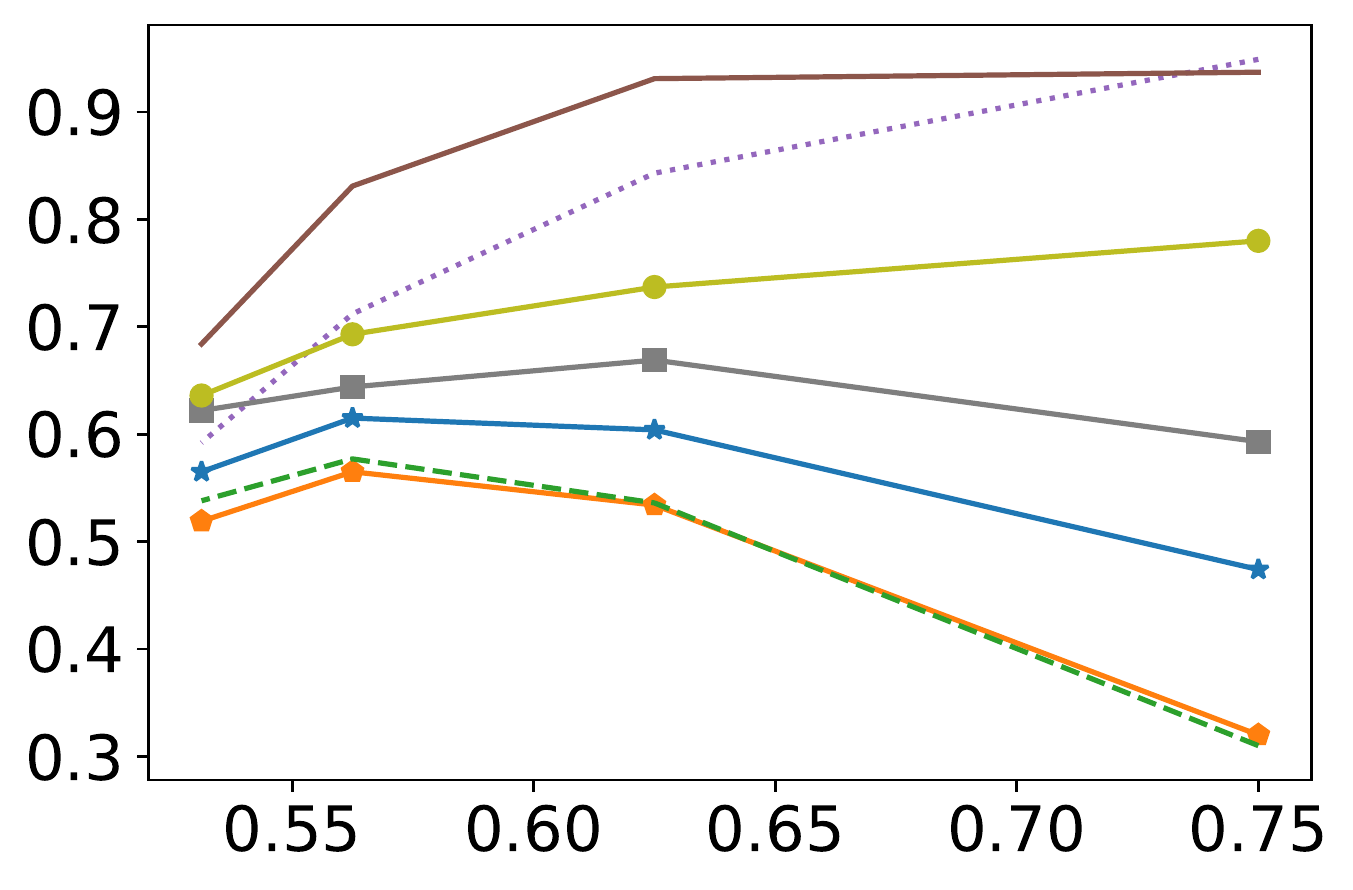}
    \subcaption{Effect of displacement ($t = t_0+\Delta$,\ $t_0=50\%$) on $p_\text{pa}(\Delta)$ on weather dataset.}
    \label{fig:exp2}
    \end{minipage}
    \\
    \begin{minipage}[b]{0.46\textwidth}
    \centering
    \includegraphics[width=\textwidth]{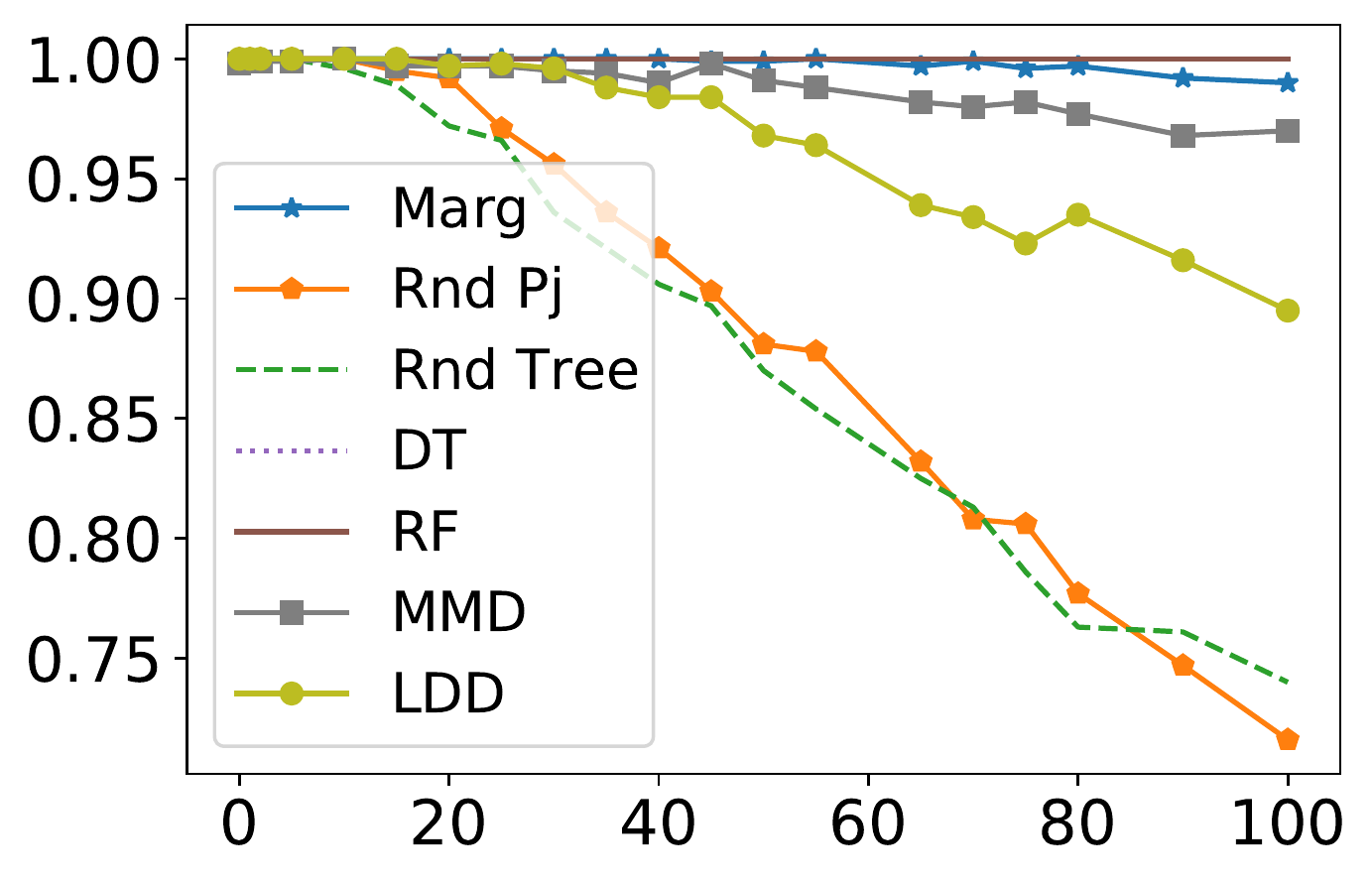}
    \subcaption{Effect of additional noise (Gaussian) dimensions on $p_\text{perm}$ on electricity dataset.}
    \label{fig:exp3}
    \end{minipage}\vfill
    \caption{Effect of parameters on statistical power ($p_\text{perm}$) and precision accuracy ($p_\text{pa}(\Delta)$). Estimators are: Marginal binning, Random Projection, Random Tree, Independent Moment Trees (DT), Random Forest Moment Trees (RF), MMD, LDD.}
    \label{fig:exp123}
\end{figure}

Based on  the theory provided in Section~\ref{sec:theo}, we can derive worst case bounds similar to standard results from classical learning theory for drift detection. 
Yet, we are also interested in average case bounds obtained from empirical estimations.

We apply the estimators as described in Table~\ref{tab:summary}.
For the binning approaches we used different numbers of bins, and equidistant and equilikely  bins. In case of Random Projection, we also vary the number of projections. In case of the $k$-NN and tree approaches we vary the number of neighbors and trees. In case of Moment Trees we consider different degrees, ensembles of independently grown Decision Trees and Random Forests. For MMD we use the biased estimator with Gauss kernel. 
Notice, that due to the setup no parameter tuning can be performed during a run.
In any case we consider all possible combinations according to Table~\ref{tab:summary}. For arrival time respecting methods we also consider skipping the last 10\% of the reference window during training. 

We use the following datasets: 
``Rotating hyperplane''~(RHP)~\cite{skmultiflow}, ``SEA''~\cite{sea}, ``stagger''~\cite{LearningWithDriftDetection}, ``RandomRBF''~(rbf)~\cite{skmultiflow}, ``Electricity Market Prices''~(elec)~\cite{electricitymarketdata}, ``Forest Covertype''~(cover)~\cite{forestcovertypedataset} and ``Nebraska Weather''~(weather)~\cite{weatherdataset}. 
For labeled datasets, the label is integrated as an additional feature, hence real drift becomes distributional drift.
To obtain a sample window with drift we sample two concepts ($\D_- \sim p_{W_-} \times U([0,1/2])$ and $\D_+ \sim p_{W_+} \times U([1/2,1])$) and concatenate them ($\D = \D_- \cup \D_+$); we then permute these samples to obtain a counterpart without drift ($\tilde\D = \{(x_i,t_{\pi(i)}) \:|\: i = 1,\cdots,n \}$ where $\D = \{(x_i,t_i)\:|\:i = 1,\cdots,n\}$). 
In case of  real world datasets we obtain two different concepts (before and after drift) by randomly sampling from before and after a given time stamp (we used a two sample test to assure that the obtained batches are indeed different, while the random selection assures no drift within the sub-windows).
Analysis of different split points on the same window use the same binning/tree; for other windows (including drift vs. no drift) we create a new binning/tree.

We investigate the effect of windows length, additional noise dimensions, offsets/imbalance (removing oldest 0\%, 12.5\%, 25\% of whole window; drift is at 50\%), and displacement of the split point $t = t_0+\Delta$ (split at $t = 50\%, 53\%, 56\%, 62\%, 75\%$ of the whole window; drift is at $t_0 = 50\%$). We repeat each experiment 1000 times. 

\paragraph{Question~\ref{Q:Whether}: ``Whether'' } 
We evaluate how well an estimator $\hat{d} = s \circ A$ detects drift. For this purpose, we estimate the probability that the estimation with drift is larger than the one without, i.e. $p_\text{perm} = \P[\hat{d}(\D_-,\D_+) > \hat{d}(\tilde\D_-,\tilde\D_+)]$, and we evaluate the probability that the estimation with and without drift can be distinguished using a threshold, i.e. $p_\text{thre} = \sup_{b} \P[\hat{d}(\D_-,\D_+) > b \geq \hat{d}(\tilde\D_-,\tilde\D_+)]$, where $\D_-, \D_+$ and $\tilde\D_-, \tilde\D_+$ are obtained 
from $\D$ and $\tilde\D$, respectively,
using the same split point $t$.
Since $p_\text{perm}$ is the probability that a random permutation increases the estimate, it is an upper bound for the statistical power ($\text{TP}/\text{T}$) of any normalization. 
Similarly, $p_\text{thre}$ is an upper bound for the balanced accuracy ($(\text{TP}/\text{T}+\text{TN}/\text{N})/2$) of (distribution dependent) threshold-based normalization.
Unlike a comparison to $0$, this procedure
does not suffer from possible estimator biases.

The results for one setup (length 150, split at drift point, no offset, total variation norm and LDD (in case of $k$-NN), where hyper-parameters are selected to optimize $p_\text{thre}$ in a previous run) are presented in Table~\ref{tab:detect}. An analysis of feature importances shows that the used descriptor has the largest impact on the results, followed by the dataset. Window length and split point displacement are in medium range, the effects of the used distance measure, and the offset are marginal. 

\begin{table}[t]
    \centering
    \caption{Empirical upper bounds: $p_\text{perm}$ (left),  $p_\text{thre}$ (right). Estimators are: Moment Tree (Random Forest), Random Projection Binning, Marginal Binning, Random Tree, MMD, and LDD.}
    \label{tab:detect}
\begin{tabular}{l@{\quad}rr@{\qquad}rr@{\qquad}rr@{\qquad}rr@{\qquad}rr@{\qquad}rr}
\toprule
dataset & \multicolumn{2}{c}{RF} & \multicolumn{2}{c}{Rnd Pj} & \multicolumn{2}{c}{Marg} & \multicolumn{2}{c}{Rnd Tree} & \multicolumn{2}{c}{MMD} & \multicolumn{2}{c}{LDD} \\
\midrule
SEA                 &                       0.56 &           0.53 &                             0.51 &                 0.49 &                    0.59 &        0.55 &                        0.62 &            0.56 &             0.54 &  0.49 &                 0.53 &     0.50 \\
cover               &                       1.00 &           0.92 &                             0.99 &                 0.88 &                    1.00 &        0.96 &                        0.99 &            0.90 &             1.00 &  0.95 &                 0.97 &     0.86 \\
elec                &                       1.00 &           1.00 &                             1.00 &                 1.00 &                    1.00 &        1.00 &                        1.00 &            1.00 &             1.00 &  0.97 &                 1.00 &     1.00 \\
rbf                 &                       1.00 &           0.99 &                             1.00 &                 0.99 &                    1.00 &        0.93 &                        1.00 &            0.96 &             0.98 &  0.90 &                 1.00 &     1.00 \\
RHP &                       0.97 &           0.86 &                             1.00 &                 0.93 &                    0.48 &        0.48 &                        1.00 &            0.93 &             0.50 &  0.52 &                 1.00 &     0.96 \\
stagger             &                       1.00 &           1.00 &                             1.00 &                 0.97 &                    1.00 &        0.96 &                        1.00 &            1.00 &             1.00 &  0.92 &                 1.00 &     0.98 \\
weather             &                       0.99 &           0.84 &                             0.86 &                 0.70 &                    0.85 &        0.69 &                        0.83 &            0.67 &             0.80 &  0.65 &                 0.91 &     0.73 \\
\bottomrule
\end{tabular}
\end{table}

As can be seen, all methods perform about equally good. Exceptions are Random Trees and Marginal Binning, which are the only methods that are better than random on the SEA dataset (Moment Tree and LDD are also able to solve SEA for larger windows sizes), and Moment Trees (RF) which is the only method that could solve the weather dataset. 
To show the impact of the window length, we plot the results for different window length for the weather dataset (see Fig.~\ref{fig:exp1}). As can be seen for most methods, using more samples increases the statistical power. 
The results on the impact of noise for the Electricity dataset are presented in Fig.~\ref{fig:exp3}.
Only Moment Trees can handle the noisy version.

\paragraph{Question~\ref{Q:When}: ``When'' } 
To evaluate precision of an estimator $\hat{d}$ we empirically evaluate the probability that the estimation at the real split point $t_0$ is larger than the one at the displaced split point $t_0+\Delta$, i.e. $p_\text{pa}(\Delta) = \P[\hat{d}(\D_-(t_0),\D_+(t_0)) > \hat{d}(\D_-(t_0+\Delta),\D_+(t_0+\Delta))]$. We refer to this as precision accuracy. Notice, that this corresponds to an ADWIN~\cite{adwin} like split point search. The feature importances provides the same results as before. The results (under the same parameters as in Table~\ref{tab:detect}) are shown in Table~\ref{tab:delay}. We also illustrate the behavior for the weather dataset in Fig.~\ref{fig:exp2} for different $\Delta$.

\begin{table}[t]
    \centering
    \caption{Precision accuracy: $\Delta = 3\%$ (left) and $\Delta = 12\%$ (right). Estimators are: Moment Tree (Random Forest), Random Projection, Marginal Binning, Random Tree, MMD, and LDD. }
    \label{tab:delay}
\begin{tabular}{l@{\quad}rr@{\qquad}rr@{\qquad}rr@{\qquad}rr@{\qquad}rr@{\qquad}rr}
\toprule
dataset & \multicolumn{2}{c}{RF} & \multicolumn{2}{c}{Rnd Pj} & \multicolumn{2}{c}{Marg} & \multicolumn{2}{c}{Rnd Tree} & \multicolumn{2}{c}{MMD} & \multicolumn{2}{c}{LDD} \\
\midrule
SEA                 &                                          0.42 &                                        0.58 &                                               0.42 &                                              0.33 &                                       0.52 &                                     0.52 &                                           0.42 &                                         0.38 &                                0.50 &                              0.48 &                                    0.50 &                                  0.53 \\
cover               &                                          0.80 &                                        0.97 &                                               0.69 &                                              0.79 &                                       0.76 &                                     0.90 &                                           0.77 &                                         0.89 &                                0.80 &                              0.93 &                                    0.71 &                                  0.83 \\
elec                &                                          1.00 &                                        1.00 &                                               0.94 &                                              1.00 &                                       0.94 &                                     1.00 &                                           0.95 &                                         1.00 &                                0.80 &                              0.90 &                                    0.95 &                                  1.00 \\
rbf                 &                                          0.96 &                                        1.00 &                                               0.93 &                                              0.99 &                                       0.81 &                                     0.91 &                                           0.89 &                                         0.97 &                                0.81 &                              0.90 &                                    0.94 &                                  1.00 \\
RHP &                                          0.73 &                                        0.93 &                                               0.75 &                                              0.90 &                                       0.48 &                                     0.48 &                                           0.76 &                                         0.90 &                                0.50 &                              0.47 &                                    0.75 &                                  0.92 \\
stagger             &                                          0.93 &                                        1.00 &                                               0.84 &                                              0.94 &                                       0.71 &                                     0.82 &                                           0.94 &                                         0.99 &                                0.82 &                              0.91 &                                    0.78 &                                  0.94 \\
weather             &                                          0.68 &                                        0.93 &                                               0.52 &                                              0.53 &                                       0.56 &                                     0.60 &                                           0.54 &                                         0.54 &                                0.62 &                              0.67 &                                    0.64 &                                  0.74 \\
\bottomrule
\end{tabular}
\end{table}

As can be seen the larger the split point displacement ($\Delta$), the higher the precision accuracy. Furthermore, except for two datasets and only with $\Delta = 3\%$, Moment Trees show the best performance. Furthermore, they tend to approach perfect precision accuracy rather quickly. 

\section{Conclusion}

In this paper we studied the theoretical and empirical properties of several metrics that are used in drift detection. We also introduced two new metric estimators based on Random Projection Binning and Moment Trees. We found that in most cases the estimation method is more important than the used distance measure, when it comes to drift detection. Also, most datasets can be solved by all methods, when it comes to drift detection. Regarding localizing the drift point, Moment Trees outperform the other methods.

\bibliographystyle{plain}
\bibliography{bib.bib}

\end{document}